\newtheorem{theorem}{Theorem}
\definecolor{Gray}{gray}{0.9}
\newcommand{\D}{\mathcal{D}}
\newcommand{\W}{\mathcal{W}}
\newcommand{\B}{\mathcal{B}}
\newcommand{\bx}{\mathbf{x}}
\newcommand{\by}{\mathbf{y}}
\newcommand{\bz}{\mathbf{z}}
\newcommand{\bb}{\mathbf{b}}
\newcommand{\bw}{\mathbf{w}}
\newcommand{\bW}{\mathbf{W}}
\newcommand{\bX}{\mathbf{X}}
\newcommand{\bs}{\mathbf{s}}
\newcommand{\thefield}{\mathbb{Z}/Q\mathbb{Z}}
\DeclareMathOperator*{\argmax}{argmax}
\icmltitlerunning{Privacy-Preserving Multi-Party Contextual Bandits}
\begin{document}

\twocolumn[
\icmltitle{Privacy-Preserving Multi-Party Contextual Bandits}

\icmlsetsymbol{equal}{*}

\begin{icmlauthorlist}
\icmlauthor{Awni Hannun}{equal,fb}
\icmlauthor{Brian Knott}{equal,fb}
\icmlauthor{Shubho Sengupta}{fb}
\icmlauthor{Laurens van der Maaten}{fb}
\end{icmlauthorlist}

\icmlaffiliation{fb}{Facebook AI Research}

\icmlcorrespondingauthor{Awni Hannun}{awni@fb.com}

\icmlkeywords{differential privacy, multiparty computation, bandits}

\vskip 0.3in
]

\printAffiliationsAndNotice{\icmlEqualContribution}

\begin{abstract}
Contextual bandits are online learners that, given an input, select an arm and receive a reward for that arm. They use the reward as a learning signal and aim to maximize the total reward over the inputs. Contextual bandits are commonly used to solve recommendation or ranking problems. This paper considers a learning setting in which multiple parties aim to train a contextual bandit \emph{together} in a private way: the parties aim to maximize the total reward but do not want to share any of the relevant information they possess with the other parties. Specifically, multiple parties have access to (different) features that may benefit the learner but that cannot be shared with other parties. One of the parties pulls the arm but other parties may not learn which arm was pulled. One party receives the reward but the other parties may not learn the reward value. This paper develops a privacy-preserving multi-party contextual bandit for this learning setting by combining secure multi-party computation with a differentially private mechanism based on epsilon-greedy exploration.
\end{abstract}

\section{Introduction}
\label{sec:introduction}
Contextual bandits are an important learning paradigm used by many recommendation systems~\citep{langford2008bandits}. The paradigm considers a series of interactions between the learner and the environment: in each interaction, the learner receives a \emph{context feature} and selects an \emph{arm} based on that context. The environment provides the learner with a \emph{reward} after the arm is pulled (\emph{i.e.}, an action is executed). In traditional contextual bandit scenarios, the learner is a single \emph{party}: that is, the party that pulls the arm is also the party that has access to all context features and that receives the reward. In many practical scenarios, however, contextual bandit learning involves multiple parties: for example, recommendation systems may involve content producers, content consumers, and the party that operates the recommendation service itself. These parties may not be willing or allowed to share all the information with each other that is needed to produce high-quality recommendations. For instance, a travel-recommendation service could recommend better itineraries by taking prior airline bookings, hotel reservations, and airline and hotel reviews into account as context. To do so, the travel-recommendation service requires data from booking, reservation, and review systems that may be operated by other parties. Similarly, a restaurant-recommendation service may be improved by considering a user's prior reservations made via a restaurant-reservation service operated by another party. 

In this paper, we develop a privacy-preserving contextual bandit that learns models in such multi-party settings. We study a multi-party contextual bandit setting in which: (1) all parties may provide some of the context features but none of the parties may learn each other's features, (2) the party that pulls the arm is the only one that may know which arm was pulled, and (3) the party that receives the reward is the only one that may observe the reward value. We develop a learning algorithm that combines techniques from secure multi-party computation~\citep{ben1988completeness} and differential privacy~\citep{dwork2006}. The algorithm achieves a high degree of privacy with limited losses in prediction accuracy compared to a non-private learner by using exploration mechanisms that naturally provide differential privacy. We provide theoretical guarantees on the privacy of our algorithm and empirically demonstrate its efficacy.

\section{Problem Statement}
\label{sec:problem_statement}
\noindent \textbf{Learning setting.}
We consider a multi-party contextual bandit setting (\emph{cf.} Figure~\ref{fig:ppcb}) with a set of parties $\mathcal{P}$,
a finite and fixed set of arms $\mathcal{A}$, and $T$ iterations. Each
iteration, $t$, in our learning setting consists of five main stages:
\begin{enumerate}[leftmargin=*]
\item Each party $p \in \mathcal{P}$ provides context features $\bx_{t,p} \in \mathbb{R}^{D_p}$ to the learner in a privacy-preserving way. We assume features are samples from a context distribution, $p(\bx)$.
\item The parties select an arm $a_t$ by jointly evaluating policy $\pi(\bx_{t,1}, \dots, \bx_{t,|\mathcal{P}|})$. Action $a_t$ is not revealed to any of the parties, and $\bx_{t,q}$ is not revealed to parties $p \neq q$.
\item The parties in $\mathcal{P}$ reveal the selected action, $a_t$, to party $p' \notin \mathcal{P}$ but not to any of the other parties. Party $p'$ pulls the corresponding arm.
\item Party $p'' \notin \mathcal{P}$ receives reward $r_t \in \mathbb{R}$, which is a sample from reward distribution $p(r | a_t, \bx_t)$, from the environment. Party $p''$ does not reveal $r_t$ to any other party.
\item The parties in $\mathcal{P}$ update policy $\pi$ without having access to the reward, $r_t$, or each other's contexts, $\bx_{t,p}$.
\end{enumerate}

\begin{figure}[t]
\centering
\includegraphics[width=\linewidth]{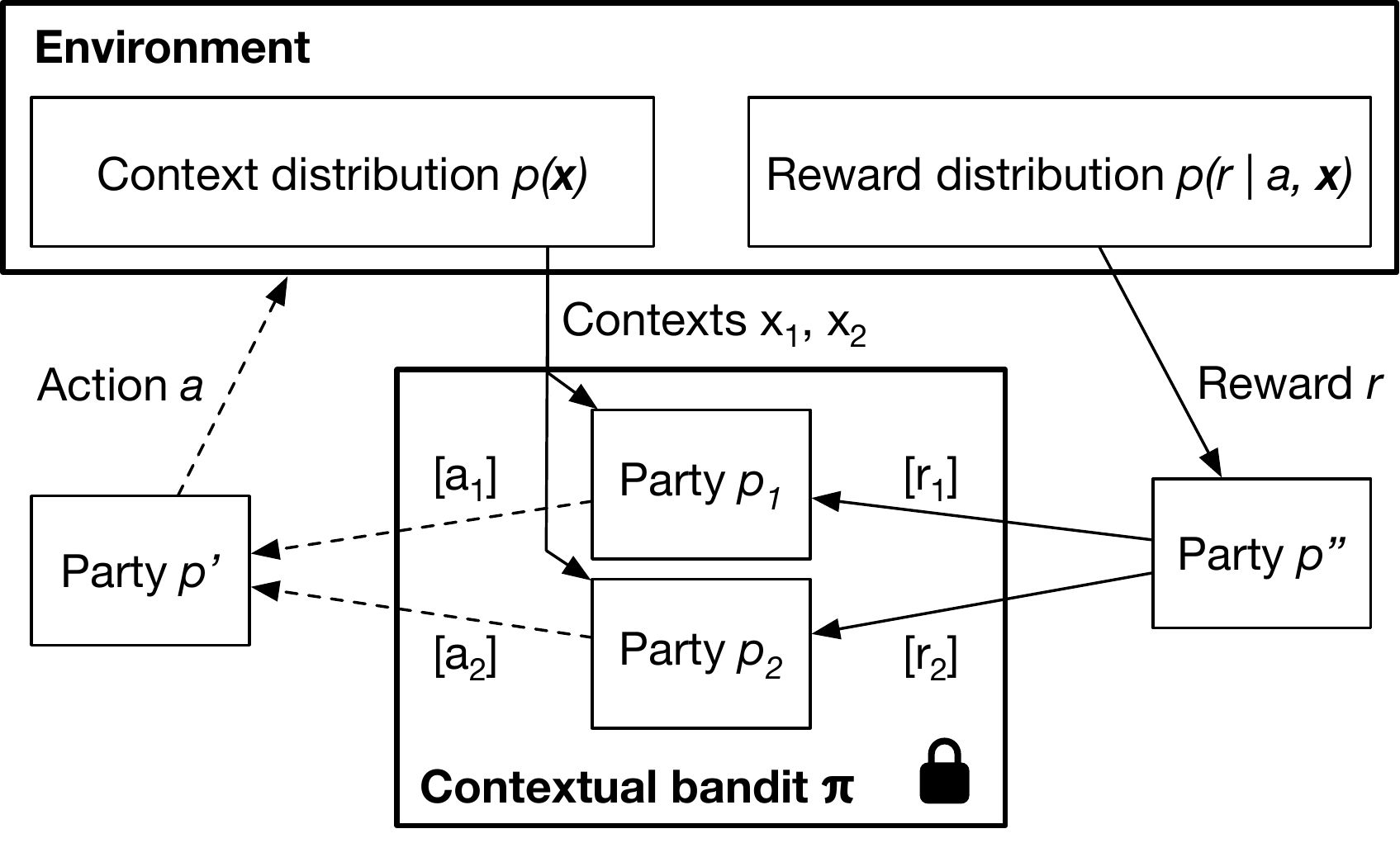}
\caption{Illustration of our privacy-preserving, multi-party contextual bandit learning
    setting. Parties $p_1 \in \mathcal{P}$ and $p_2 \in \mathcal{P}$ collaborate to learn a contextual-bandit policy, $\pi$, without leaking information to each other. The lock icon indicates parts of the learner in which computations are performed on encrypted information. Dashed lines
    represent differentially private information. Variables
    in square brackets denote secret shares of data.}
\label{fig:ppcb}
\end{figure}

Together, the parties learn a policy $\pi(\bx_{t,1}, \dots, \bx_{t,|\mathcal{P}|}) \in \Pi$ that maximizes the average \emph{reward} over all $T$ iterations:
\begin{equation}
V(\pi) = \frac{1}{T} \sum_{t=1}^T \mathbb{E}\left[r_t | \pi(\bx_{t,1}, \dots, \bx_{t,|\mathcal{P}|}), \bx_{t,1}, \dots, \bx_{t,|\mathcal{P}|}\right],\nonumber
\end{equation}
where the expectation is over the context distribution, $p(\bx)$, and reward distribution, $p(r|
a, \bx)$, that are given by the environment. We assume these distributions are fixed. We also assume the reward, $r$, has a fixed range, $r \in [0, 1]$.

Our policy set, $\Pi$, is the set of all epsilon-greedy policies that use models with a linear relation between
context features $\bx_{t,p}$ and the corresponding score for arm
$a_t$~\citep{li2010contextual}. In particular, let $\bx_t \in \mathbb{R}^D$ be
the concatenated context features of all $|\mathcal{P}|$ parties. To compute the score for
arm $a$ at iteration $t$ we use a linear model $s_a = \bw_a^\top \bx_t$. Let $\bX_a$
be the $N_t \!\times\! D$ design matrix at iteration $t$ which consists of the $N_t$
context vectors observed before pulling arm $a$. Similarly, $\mathbf{r}_a \in
\mathbb{R}^{N_t}$ is the vector of observed rewards after pulling that same arm. The learner aims to find the model parameters
at iteration $t$ by minimizing the least-squares error $\|\bX_a \bw_a -
\mathbf{r}_a\|_2^2$ using the linear least-squares solution:
$\bw_a = \bW_a^{-1}\bb_a$, where $\bW_a = \bX_a^\top \bX_a$ and $\bb_a = \bX_a^\top
\mathbf{r}_a$.

\noindent \textbf{Security model.}
In line with the cooperative nature of our learning setting, we assume a
\emph{honest-but-curious} security model~\citep{goldreich2009}: we assume parties do not collude and
follow the specified learning algorithm, but parties may try and learn as much as
possible from the information they observe when executing the algorithm.

Our learning algorithm comes with a differential-privacy guarantee on the information
that parties $p'$ and $p''$ can obtain on context features $\forall p \in
\mathcal{P}: \bx_{t,p}$. The action must be revealed to party $p'$ so that it
can pull the corresponding arm. Similarly, party $p''$ must receive the reward.
Hence, some information is ultimately revealed to parties
$p'$ and $p''$. We provide a differential-privacy guarantee on the information leaked about $\bx_t$ by exploiting
the randomness introduced by the epsilon-greedy exploration of the bandit algorithm.

Our algorithm guarantees that the other parties, $p \in \mathcal{P}$, do not gain any additional knowledge
about context features $\bx_{t,q}$ with $q \!\neq\!  p$.
To obtain this guarantee, the algorithm does assume that all parties have access to privately shared random
numbers generated during an \emph{off-line} phase.\footnote{These can be
generated by a trusted third party, secure enclave, or additive homomorphic encryption \citep{brakerski2012leveled}.}

\section{Privacy-Preserving Contextual~Bandits}
\label{sec:private_bandits}
Our privacy-preserving, multi-party contextual bandit employs an
epsilon-greedy policy that assumes a linear relation between the context
features and the score for an arm. To obtain privacy guarantees, we use
\emph{arithmetic secret sharing} techniques commonly used in secure multi-party
computation to implement our learner \citep{spdz}. We rely on the
differentially private properties of epsilon-greedy policies when performing
actions (see Section~\ref{sec:guarantees}).

In arithmetic secret sharing, a scalar value $x \in \thefield$ (where $\thefield$ denotes a ring with $Q$ elements, and $Q$ is large) is shared across $|\mathcal{P}|$ parties such that the sum of the
shares reconstructs the original value $x$. We denote the secret sharing
of $x$ by $[x] = \{\forall p \in \mathcal{P}: [x]_p\}$, where $[x]_p \in
\thefield$ indicates party $p$'s share of $x$. The representation has the
property that $\sum_{p \in \mathcal{P}} [x]_p \mod Q \!=\! x$. We use a simple encoding
to represent the real-valued context features and model weights in
$\thefield$: to obtain $x \in \thefield$, we multiply $x_R \in \mathbb{R}$ with a large scaling factor $B$ and round to the nearest integer: $x = \lfloor B x_R \rceil$, where $B = 2^L$ for some
precision parameter, $L$. We decode an encoded value, $x$, by
computing $x_R \approx x / B$. Encoding real values this way incurs a precision loss
that is inversely proportional to $L$.

Party $p \in \mathcal{P}$ shares a context feature, $x$, by drawing $|\mathcal{P}|-1$ numbers uniformly at
random from $\thefield$ and distributing them among the other parties. The random numbers
are the shares, $\forall q \neq p: [x]_q$, for those parties. Subsequently,
party $p$ computes its own share as $[x]_p = x - \sum_{q \neq p} [x]_q$. None of the other parties can infer any information about $x$ from
their share. The policy is evaluated using exclusively arithmetically shared data and parameters. To allow party $p'$ to pull an arm $a_t \in \mathcal{A}$, all parties
communicate their shares, $[a_t]_p$, to party $p'$, which computes $a_t = \sum_{p \in |\mathcal{P}|}
[a_t]_p$ and performs the corresponding action, $a_t$.
Subsequently, party $p''$ receives the reward, $r_t$, which it secret shares with parties $p \in \mathcal{P}$. The parties use $[r_t]$ to update the policy, again, by performing computations exclusively on arithmetic shares.

Algorithm~\ref{alg:bandits} gives an overview of our privacy-preserving contextual bandit
learner, which follows \citet{li2010contextual} but uses epsilon-greedy rather than UCB exploration. Unless otherwise stated, all computations in the algorithm are
performed by all parties in $\mathcal{P}$ \emph{on the secret-shared
values} without leaking any information to the other parties. 

\begin{algorithm*}[ht!]
  \caption{Privacy-preserving contextual bandit. \footnotesize{(Follows \citet{li2010contextual} but uses epsilon-greedy rather than UCB exploration.)}}
  \label{alg:bandits}
\begin{algorithmic}
\STATE {\bfseries Input:} \\
- Exploration parameter, $\epsilon$.\\
- Party $p' \notin \mathcal{P}$ that pulls arms; party $p'' \notin \mathcal{P}$ that receives rewards.\\
- Set of parties, $\mathcal{P}$; set of arms, $\mathcal{A}$; number of context features produced by all parties, $D$.\\
- A store of Beaver triples and private (Bernoulli and uniform) samples generated off-line.
\STATE {\bfseries Output:} Secretly shared weights, $[\W]$, and biases, $[\B]$.

Party $p'$ initializes weights $\W = \{\bW_1, \dots, \bW_{| \mathcal{A} |}\}$ with $\bW_a \leftarrow \mathbf{I}_{D \times D}$.\\
Party $p'$ initializes biases $\B = \{\bb_1, \dots, \bb_{| \mathcal{A} |}\}$ with $\bb_a \leftarrow \mathbf{0}_{D \times 1}$.\\
Party $p'$ secretly shares weights $[\W] = \{[\bW_1], \dots, [\bW_{| \mathcal{A} |}]\}$ and biases $[\B] = \{[\bb_1], \dots, [\bb_{| \mathcal{A} |}]\}$.\\
\FOR{$t \in \{1, \dots, T\}$}
\STATE
    Parties observe context features $\forall p \in \mathcal{P}: \bx_{t,p} \in \mathbb{R}^{D_p}$ (with $\sum_{p \in \mathcal{P}} D_p = D$).\\
	Parties secretly share context features $\forall p \in \mathcal{P}: [\bx_{t,p}]$.\\
	Parties concatenate shares $\forall p \in \mathcal{P}: [\bx_{t,p}]$ into single share $[\bx_t]$ with $\bx_t \in \mathbb{R}^D$.\\

    Select next private sample $[y_t]$ with $y_t \sim Bernoulli(\epsilon)$.

    \FOR{$a \in \mathcal{A}$}
    \STATE
		Compute $[\bw_a] \leftarrow [\bW_a^{-1} \bb_a]$ (weights for arm $a$).\\
		Compute $[s_{t,a}] \leftarrow [\bw_a^\top \bx_{t}]$ (score for arm $a$).\\

        Select next random private sample $[v_a]$ with $v_a \sim Uniform(0, 1)$.\\
        Compute $[\tilde{s}_{t,a}] \leftarrow [y_t v_a + (1- y_t) s_{t,a}]$ (differentially private score for arm $a$).\\
    \ENDFOR

	Compute $[a_t] \leftarrow [\argmax_a \tilde{s}_{t,a}]$ (arm to be pulled).\\
	Party $p'$ opens up $a_t$ and pulls arm.\\
	Party $p'$ constructs $\forall a \in \mathcal{A}: o_a \leftarrow \mathbb{I}(a = a_t)$ (binary values indicating selected action).\\
	Party $p'$ secretly shares all indicator variables, $\forall a \in \mathcal{A}: [o_a]$.\\
	Party $p''$ receives reward $r_t$ and secretly shares $[r_t]$.\\

	\FOR{$a \in \mathcal{A}$}
    \STATE
        Compute $[\bW_a] \leftarrow [\bW_a + o_a \bx_t \bx_t^\top ]$.\\
    	 Compute $[\bb_a] \leftarrow [\bb_a + o_a r_t \bx_t]$.\\
    \ENDFOR
\ENDFOR
\end{algorithmic}
\end{algorithm*}

The algorithm relies on the homomorphic properties of arithmetic secret sharing in order to
perform computations directly on the encrypted data. Below, we give an overview
of how these computations are implemented. The primary cost in
executing Algorithm~\ref{alg:bandits} is the number of \emph{communication rounds} 
between parties for certain operations, such as the evaluation of the $\argmax$ function. This communication can sometimes be overlapped
with other computations.

\noindent\textbf{Addition.} The addition of two encrypted values, $[z] = [x] + [y]$, can be trivially implemented by having each party $p$ sum their shares of $[x]$ and $[y]$. That is, each party $p \in \mathcal{P}$ computes $[z]_p \leftarrow [x]_p + [y]_p$.

\noindent\textbf{Multiplication.} To facilitate multiplication of two secret
shared values, the parties use random Beaver triples \citep{beaver1991} that were generated in an off-line preprocessing phase.
A Beaver triple of secret shared values $([a], [b], [c])$ satisfies the property $c \!=\!
ab$. The parties use the Beaver triple to compute $[\alpha] = [x] - [a]$ and
$[\beta] = [y] - [b]$ and decrypt $\alpha$ and $\beta$. This does not leak
information if $a$ and $b$ were drawn uniformly at random from the ring $\thefield$.
The product $[x][y]$ can now be evaluated by computing $[c] + \alpha[b] +
\beta[a] + \alpha\beta$. It is straightforward to confirm that the result of the
private multiplication is correct:
\begin{multline}
\label{eq:beaver}
[c] + \alpha[b] + \beta[a] + \alpha\beta= \\
    [a][b] + [x][b] - [a][b] + [y][a]-[b][a]+([x]-[a])([y]-[b])\\
    = [x][y].\nonumber
\end{multline}
To decrypt $\alpha$ and $\beta$, all parties to communicate their shares of $[\alpha]$ and $[\beta]$ to each other: a communication round. The required correction for the additional scaling term from the encoding, $B$, incurs a second communication round.

\noindent\textbf{Square.} To compute the square $[x^2]$, the parties use a Beaver pair $([a], [b])$ such
that $b \!=\! a^2$. Akin to before, the parties use the Beaver pair compute $[\alpha] = [x] - [a]$, decrypt $\alpha$, and obtain the result via $[x^2] = [b]
+ 2\alpha[a] + \alpha^2$.

\noindent\textbf{Dot product, matrix-vector, and matrix-matrix multiplication.} The operations on scalars we described above can readily be used to perform operations on vectors and matrices that are secret-shared in element-wise fashion. Specifically, dot products combine multiple element-wise multiplications and additions. Matrix-vector and matrix-matrix multiplication is implemented by repeated computation of dot products of two arithmetically secret-shared vectors.

\noindent\textbf{Matrix inverse.} At each round, the algorithm computes $|\mathcal{A}|$ matrix inverses of the $D\!\times\!D$ matrices $\bW_a$, which is computationally costly. Because we only perform rank-one updates of each $\bW_a$ matrix, we can maintain a representation of $\bW_a^{-1}$ instead, rendering the matrix inversion in the algorithm superfluous, and use the Sherman-Morrison formula \citep{bartlett1951} to perform the parameter update:
\begin{equation}
    \label{eq:sherman}
\left[\bW_a^{-1}\right] \leftarrow \left[\bW_a^{-1} - o_a \frac{\bW_a^{-1} \bx_t \bx_t^\top \bW_a^{-1}}{1 + \bx_t^\top \bW_a^{-1} \bx_t}\right].
\end{equation}
This expression comprises only multiplications, additions, and a reciprocal (see below).

\noindent\textbf{Reciprocal.} We compute the reciprocal $[1/x]$ using a
Newton-Rhapson approximation with iterates $[x_{t+1}] \leftarrow \left[2 x_t -
x x_t^2 \right]$. The Newton-Rhapson approximation converges rapidly
when $x_0$ is initialized well. The reciprocal is only used in the Sherman-Morrison formula, so we choose the initialization with
this in mind. Because $\bW_a^{-1}$ in Equation~\ref{eq:sherman} is the inverse of a positive-definite matrix, it
is itself positive definite; the denominator in Equation~\ref{eq:sherman}, therefore, lies in the range $[1, c]$. Empirically, we found that $c < 10$ and that an initial value of $x_0 = 3e^{-(x - 0.5)} + 0.003$ leads to good approximations in this range.


\begin{algorithm}[t]
\caption{Privacy-preserving identification of all maximum elements in a secret-shared vector.}
\label{alg:argmaxes}
\begin{algorithmic}
\STATE {\bfseries Input:} An arithmetically secret-shared vector $[\bx]$.
\STATE {\bfseries Output:} A secret shared vector $[\by] \in \{0, 1\}^{| \mathcal{A} |}$ with ones indicating the maximum values in $[\bx]$.

\FOR{$i, j \in \mathcal{A} \times \mathcal{A}$}
\STATE
  Compute $[\ell_{i,j}] \leftarrow [\bx_i \geq \bx_j]$. \\
\ENDFOR \\
Set $[\by] \leftarrow [\mathbf{1}]$. \\
\FOR {$i \in \mathcal{A}$}
    \FOR{$j \in \mathcal{A}$}
        \STATE
        Compute $[y_i] \leftarrow [y_i][\ell_{i, j}]$. \\
    \ENDFOR
\ENDFOR
\end{algorithmic}
\end{algorithm}

\noindent\textbf{Argmax.} We compute the index of the maximum value of a vector
$[\bx]$ where $\bx \in (\thefield)^{|\mathcal{A}|}$ as a one-hot vector of the same
size as $\bx$. Our algorithm for evaluating the $\argmax$ of $[\bx]$, called $[\bz]$, has three main stages:
\begin{enumerate}[leftmargin=*]

    \item Use Algorithm~\ref{alg:argmaxes} to construct a vector $[\by]$ of the same length as $[\bx]$ that
        contains ones at the indices of all maximum values of $[\bx]$ and zeros elsewhere.

    \item Break ties in $[\by]$ by multiplying it element-wise with a random permutation of $\{1, \dots, |\mathcal{A}|\}$. This
        permutation, $[\boldsymbol{\gamma}]$, is generated and securely shared off-line.

    \item Use Algorithm~\ref{alg:argmaxes} to construct a one-hot vector $[\bz]$ that indicates the maximum value of
        $[\by \circ \boldsymbol{\gamma}]$.
\end{enumerate}
\vspace{-2mm}

The permutation in step 2 randomly breaks ties for the index
of the maximum value. We opt for random tie-breaking because breaking ties deterministically may leak information. For
example, if ties were broken by always selecting the last maximum value from $[\bx]$, the adversary would learn that $[\bx]$ did not have multiple maximum values if it observed a $[\bz]$ that has a $1$ as its first element.

In Algorithm~\ref{alg:argmaxes}, the evaluation of all $[\bx_i \geq \bx_j]$ terms
is performed on a binary secret share of $\bx_i$ and $\bx_j$.
A binary secret share is a is a special type of arithmetic secret sharing for binary data in which the ring size $Q = 2$ \citep{gmw}. To convert an arithmetic
share $[\bx]$ into a binary share $\langle \bx \rangle$, each party first secretly
shares its arithmetic share with the other parties and then performs addition
of the resulting shares. To construct the binary share $\langle [\bx]_p \rangle$
of its arithmetic share $[\bx]_p$, party $p \in \mathcal{P}$: (1) draws $|\mathcal P|-1$ random bit strings
$\langle [\bx]_p \rangle_q$ and shares those with the other parties and (2)
computes its own binary share $\langle [\bx]_p \rangle_p = \bigoplus_{q \neq p}
\langle [\bx]_p \rangle_q$. The parties now each obtained a binary share of $[\bx]_p$
without having to decrypt $\bx$. This process is repeated for each party $p \in \mathcal{P}$ to create
binary shares of all $|\mathcal{P}|$ arithmetic shares $[\bx]_p$. Subsequently, the
parties compute $\langle \bx \rangle = \sum_{p\in \mathcal P} \langle[\bx]_p\rangle$. The summation is implemented by a Ripple-carry adder in $\log_2 (|\mathcal P| \log_2 Q)$ rounds \citep{catrina2010improved}.

Subsequently, the $[\bx_i \geq \bx_j]$ operation is performed by computing $[\by] \leftarrow [\bx_i] - [\bx_j]$, constructing the binary secret sharing $\langle \by \rangle$ per the procedure outlined above, obtaining the most
significant bits, $\langle \by \rangle^{(MSB)}$, and converting those bits back to an arithmetic share. To convert from a binary share
$\langle \by \rangle$ to an arithmetic share $[\by]$, the parties compute $[\by] =
\sum_{b=1}^{B'} 2^b \left[\langle \by\rangle^{(b)}\right]$, where $\langle \by\rangle^{(b)}$
contains the $b$-th bits of the binary share $\langle \by\rangle$ and $B'$ is the total number of bits in the shared secret. To create the arithmetic
share of a bit, $\left[\langle \by \rangle^{(b)}\right]$, each party $p \in \mathcal{P}$
draws bits uniformly at random, and shares the difference between
their bits and the random bits with the other parties. The parties sum all resulting shares to
obtain $\left[\langle \by \rangle^{(b)}\right] = [\bx_i \geq \bx_j]$.

Overall, the evaluation of $[\bx_i \geq \bx_j]$ requires seven communication rounds. We parallelize the reduction over $i$, and perform the reduction over $j$ using a binary reduction tree in $\lceil \log_2 \left(|\mathcal{A}|-1\right) \rceil$ communication rounds.

\begin{figure}[t]
\centering
\includegraphics[width=\linewidth]{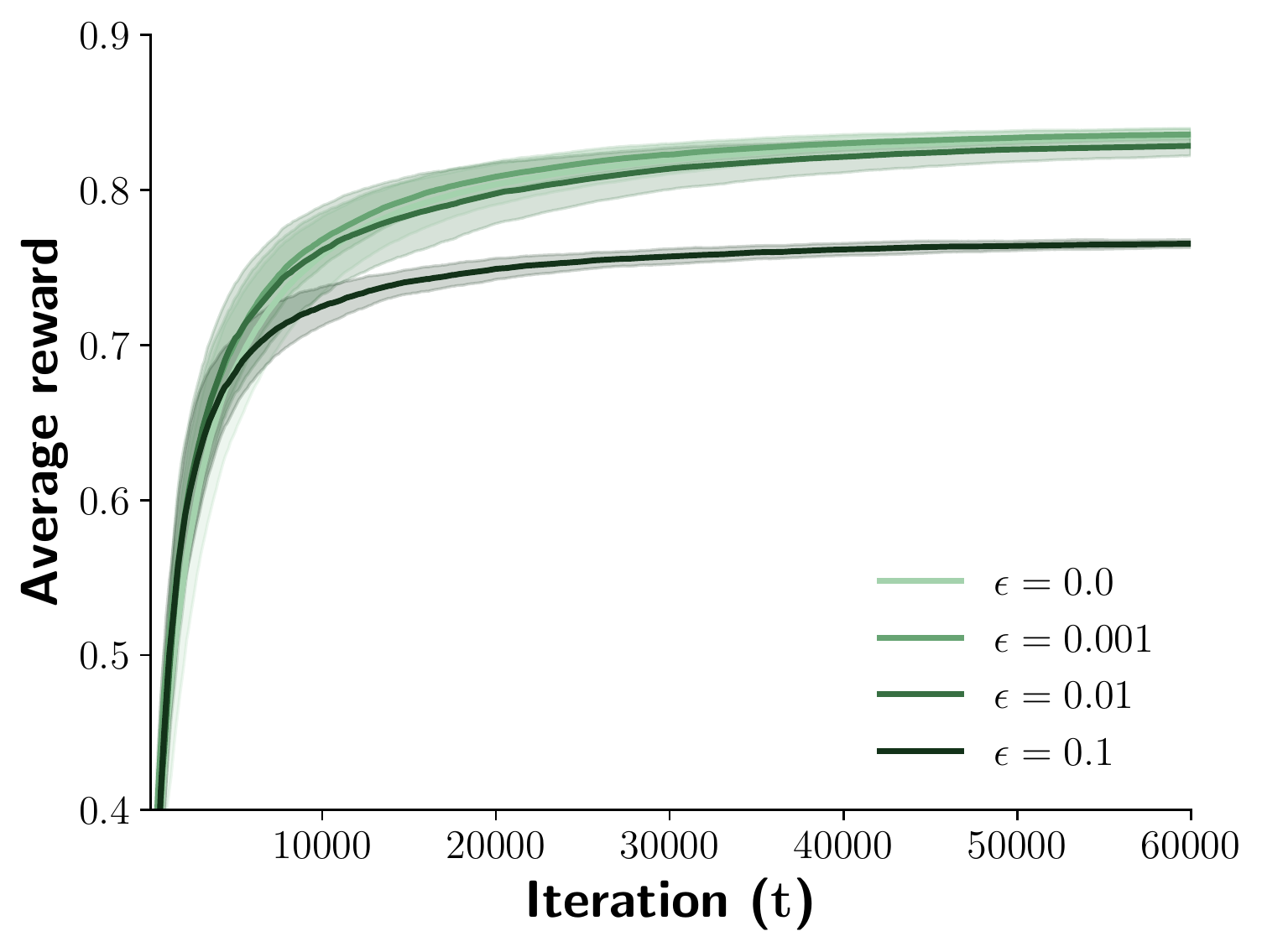}
\caption{Average reward (over 10 runs) of privacy-preserving contextual bandits
during training for four different values of $\epsilon$. Shaded regions indicate the standard deviation across runs.}
\label{fig:bandits_epsilon}
\end{figure}

\begin{figure}[t]
\centering
\includegraphics[width=1.005\linewidth]{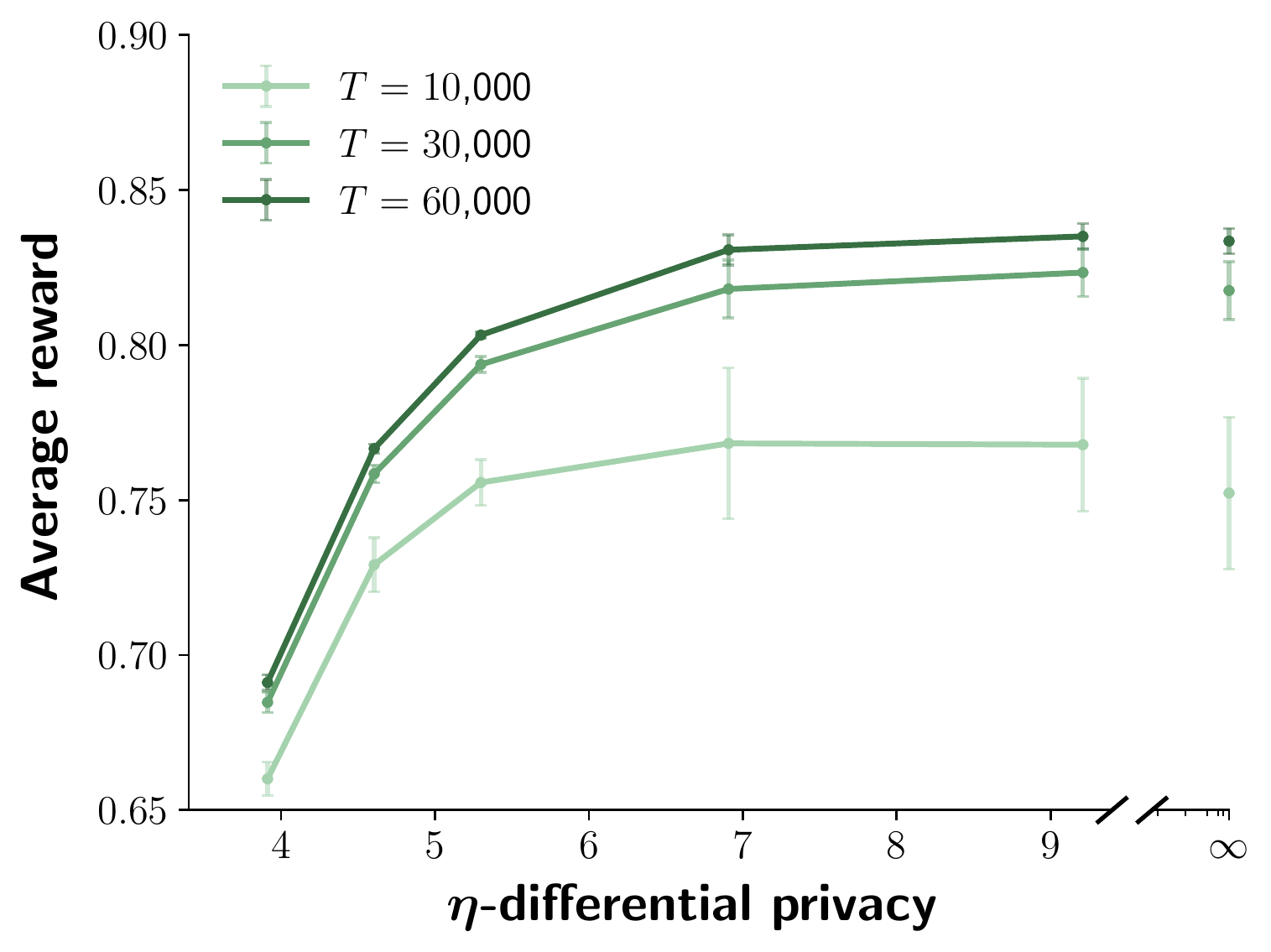}
\caption{Average reward as a function of differential privacy, $\eta = \log(|\mathcal{A}| / \epsilon)$. The differential privacy of the algorithm decreases as $\eta$ increases; the value $\eta = \infty$ corresponds to the non-private version of the algorithm.}
\label{fig:reward_privacy}
\end{figure}

\section{Privacy Guarantee}
\label{sec:guarantees}
The privacy guarantees for our algorithm rely on: (1) well-known guarantees on
the security of arithmetic and binary secret sharing mechanisms and (2) the
differentially private opening of actions $a_t \in \mathcal{A}$ by party $p'$. For security
guarantees of secret sharing, we refer the reader to \citet{spdz}. We focus on the
differentially private opening of actions. Our primary observation is a natural
link between epsilon-greedy policies and differential privacy. 

A mechanism $\mathcal{M}$ is $\eta$-differentially
private if for all datasets $\D$ and $\D'$ that
differ by a single example and for all output sets $\mathcal{S} \subseteq \text{Range}(\mathcal{M})$ the following holds \citep{dwork2011differential}:
\begin{equation*}
P(\mathcal{M}(\D) \in \mathcal{S}) \le e^\eta P(\mathcal{M}(\D') \in \mathcal{S}).
\end{equation*}

\begin{theorem}
\label{thm:eps_greedy_dp}
If only the selected action $a_t \in \mathcal{A}$ at round $t$ is revealed, then
a policy that uses $\epsilon$-greedy exploration is $\log(|\mathcal{A}| / \epsilon)$-differentially
private.
\end{theorem}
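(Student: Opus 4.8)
The plan is to view the $\epsilon$-greedy selection rule as a randomized mechanism $\mathcal{M}$ that maps the current history (the dataset $\D$, which determines the score vector $\bs$) to the single revealed action $a_t \in \mathcal{A}$, and then to verify the differential-privacy inequality directly from the output distribution this mechanism induces. First I would write this distribution explicitly using the Bernoulli mixture in Algorithm~\ref{alg:bandits}: conditioning on the exploration bit $y_t \sim \mathrm{Bernoulli}(\epsilon)$, with probability $1-\epsilon$ the mechanism returns $\argmax_a s_{t,a}$, and with probability $\epsilon$ it returns $\argmax_a v_a$ with the $v_a$ drawn i.i.d.\ from $\mathrm{Uniform}(0,1)$. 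Because these $v_a$ are continuous and i.i.d., ties have probability zero and the exploration branch selects each arm with probability exactly $1/|\mathcal{A}|$.

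The key step is a uniform two-sided bound on the per-arm probability that does not depend on the input. For every history $\D$ and every arm $a \in \mathcal{A}$, the exploration branch alone forces
\begin{equation*}
P(\mathcal{M}(\D) = a) \;\ge\; \epsilon \cdot \tfrac{1}{|\mathcal{A}|} \;=\; \tfrac{\epsilon}{|\mathcal{A}|},
\end{equation*}
while trivially $P(\mathcal{M}(\D) = a) \le 1$. I would stress that both bounds hold for \emph{arbitrary} inputs, so they hold in particular for any pair of neighboring datasets $\D,\D'$; the exploration ``floor'' is what drives the argument, and the specific way in which $\D$ and $\D'$ differ never enters.

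Combining the two bounds yields, for any single arm,
\begin{equation*}
\frac{P(\mathcal{M}(\D) = a)}{P(\mathcal{M}(\D') = a)} \;\le\; \frac{1}{\epsilon/|\mathcal{A}|} \;=\; \frac{|\mathcal{A}|}{\epsilon} \;=\; e^{\eta},
\end{equation*}
with $\eta = \log(|\mathcal{A}|/\epsilon)$. To finish, I would lift this pointwise inequality to an arbitrary output set $\mathcal{S} \subseteq \mathcal{A}$ by summing it over the finitely many arms in $\mathcal{S}$, which immediately gives $P(\mathcal{M}(\D) \in \mathcal{S}) \le e^{\eta} P(\mathcal{M}(\D') \in \mathcal{S})$ --- the $\eta$-differential-privacy condition.

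There is no deep obstacle here: the substantive observation is simply that exploration imposes a lower bound of $\epsilon/|\mathcal{A}|$ on \emph{every} arm's probability uniformly over all inputs, so the worst-case privacy loss is the ratio of the trivial upper bound $1$ to this floor. The only care needed is in justifying the exact value $1/|\mathcal{A}|$ for the exploration branch (the $\argmax$ over i.i.d.\ continuous uniforms is uniform, with ties having measure zero), and in noting that the stated constant $|\mathcal{A}|/\epsilon$ is slightly loose: a sharper accounting that also uses the exploitation mass gives worst-case ratio $\tfrac{|\mathcal{A}|}{\epsilon} - |\mathcal{A}| + 1$, which only strengthens the claimed guarantee.
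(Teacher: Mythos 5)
Your proof is correct and rests on the same core observation as the paper's: the exploration branch guarantees every arm is selected with probability at least $\epsilon/|\mathcal{A}|$ under any input, so the worst-case ratio of selection probabilities between neighboring datasets is at most $|\mathcal{A}|/\epsilon$. The paper reaches the same bound via an explicit case analysis on whether the revealed arm is the $\argmax$ under each dataset (obtaining the sharper intermediate constant $|\mathcal{A}|/\epsilon - |\mathcal{A}| + 1$ that you also note), while your version is a mild streamlining that additionally makes explicit two points the paper leaves implicit: that the exploration branch is exactly uniform (ties among the continuous $v_a$ have measure zero) and that the pointwise bound lifts to arbitrary output sets $\mathcal{S}$ by summation.
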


\begin{proof}
The probability of selecting the action that corresponds to the maximum score is given by $P\left(\pi(\bs) = \argmax_{a \in \mathcal{A}} s_a\right) = (1 - \epsilon) + \frac{\epsilon}{|\mathcal{A}|}$. We use this probability to bound the privacy loss, $\eta$:
\begin{align*}
    \log &\frac{P(\pi(\bs) = j \mid j = \argmax_{a \in \mathcal{A}} s_a)}{P(\pi(\bs) = k \mid k \ne \argmax_{a \in \mathcal{A}} s_a)} \\
    &= \log \frac{1 - \epsilon + \epsilon / |\mathcal{A}|}{\epsilon / |\mathcal{A}|} \\
    &= \log \left( \frac{|\mathcal{A}|}{\epsilon} - |\mathcal{A}| + 1 \right) \le \log \left(\frac{|\mathcal{A}|}{\epsilon} \right).
\end{align*}
Using the fact that the exploration parameter $\epsilon \in [0, 1]$, we also observe that:
\begin{align*}
    \log &\frac{P(\pi(\bs) = j \mid j \ne \argmax_{a \in \mathcal{A}} s_a)}{P(\pi(\bs) = k \mid k = \argmax_{a \in \mathcal{A}} s_a)} \\
    &= \log \frac{\epsilon / |\mathcal{A}|}{1 - \epsilon + \epsilon / |\mathcal{A}|} \\
    &= \log \left( \frac{\epsilon}{(1 - \epsilon)|\mathcal{A}| + \epsilon} \right) \\
    &\le \log 1 = 0.
\end{align*}
To complete the proof, we observe that:
\begin{equation*}
    \log \frac{P(\pi(\bs) = j \mid j \ne \argmax_{a \in \mathcal{A}} s_a)}{P(\pi(\bs) = k \mid k \ne \argmax_{a \in \mathcal{A}} s_a)} = \log \frac{\epsilon/|\mathcal{A}|}{\epsilon/|\mathcal{A}|} = 0.\qedhere
\end{equation*}
\end{proof}
The above result is a generalization
of the randomized response protocol \citep{warner1965} to $|\mathcal{A}| > 2$ arms and arbitrary $\epsilon \in
[0, 1]$.
Because the privacy loss grows logarithmically with the number of actions, we obtain
high differential privacy for reasonable settings of exploration parameter $\epsilon$.

\begin{figure}[t]
\centering
\includegraphics[width=\linewidth]{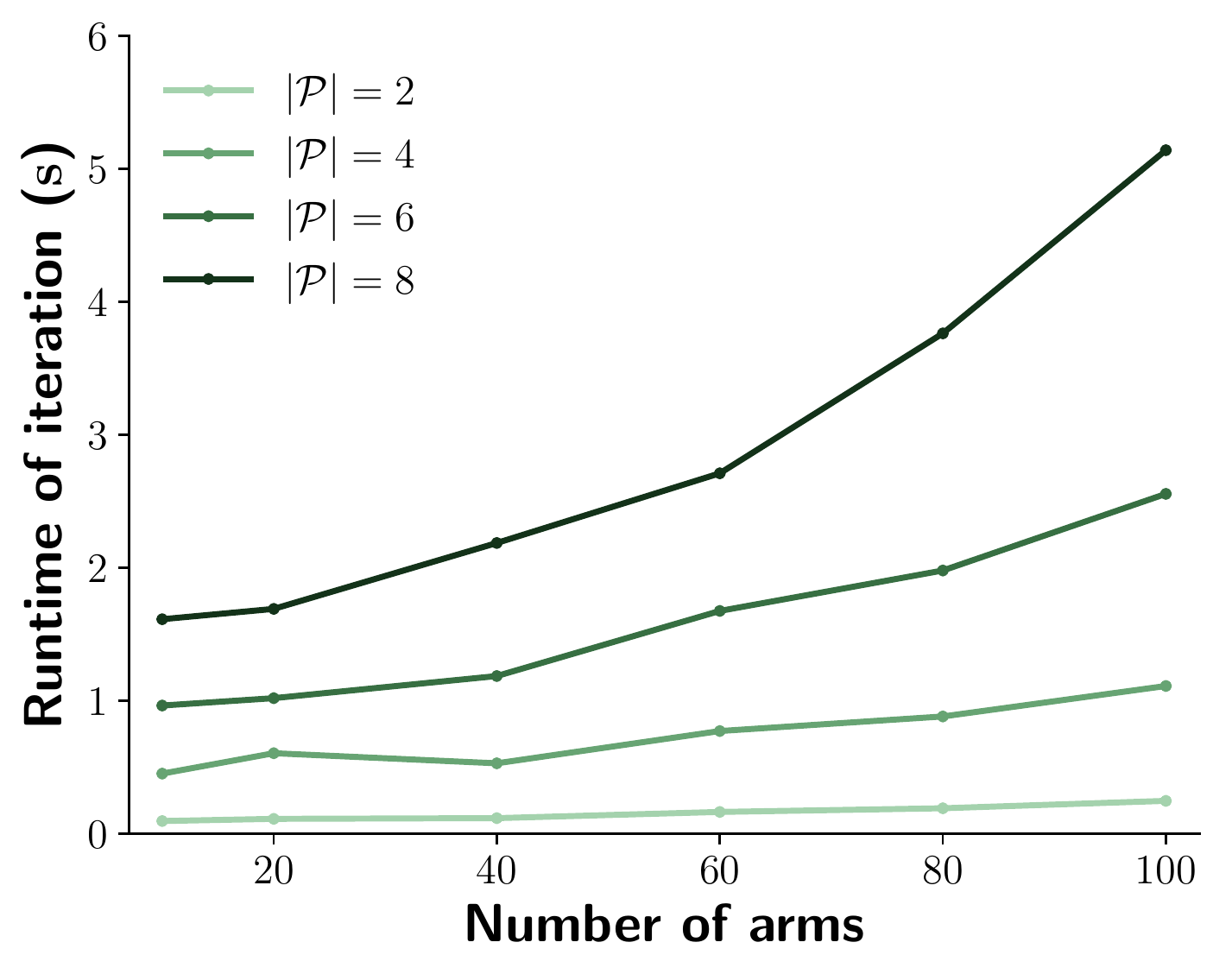}
\caption{Run-time of Algorithm~\ref{alg:bandits} as a function of the number of
arms, $|\mathcal{A}|$. Each line is for a different number of parties, $|\mathcal{P}|$.}
\label{fig:scaling_arms}
\end{figure}

\begin{figure}[t]
\centering
\includegraphics[width=\linewidth]{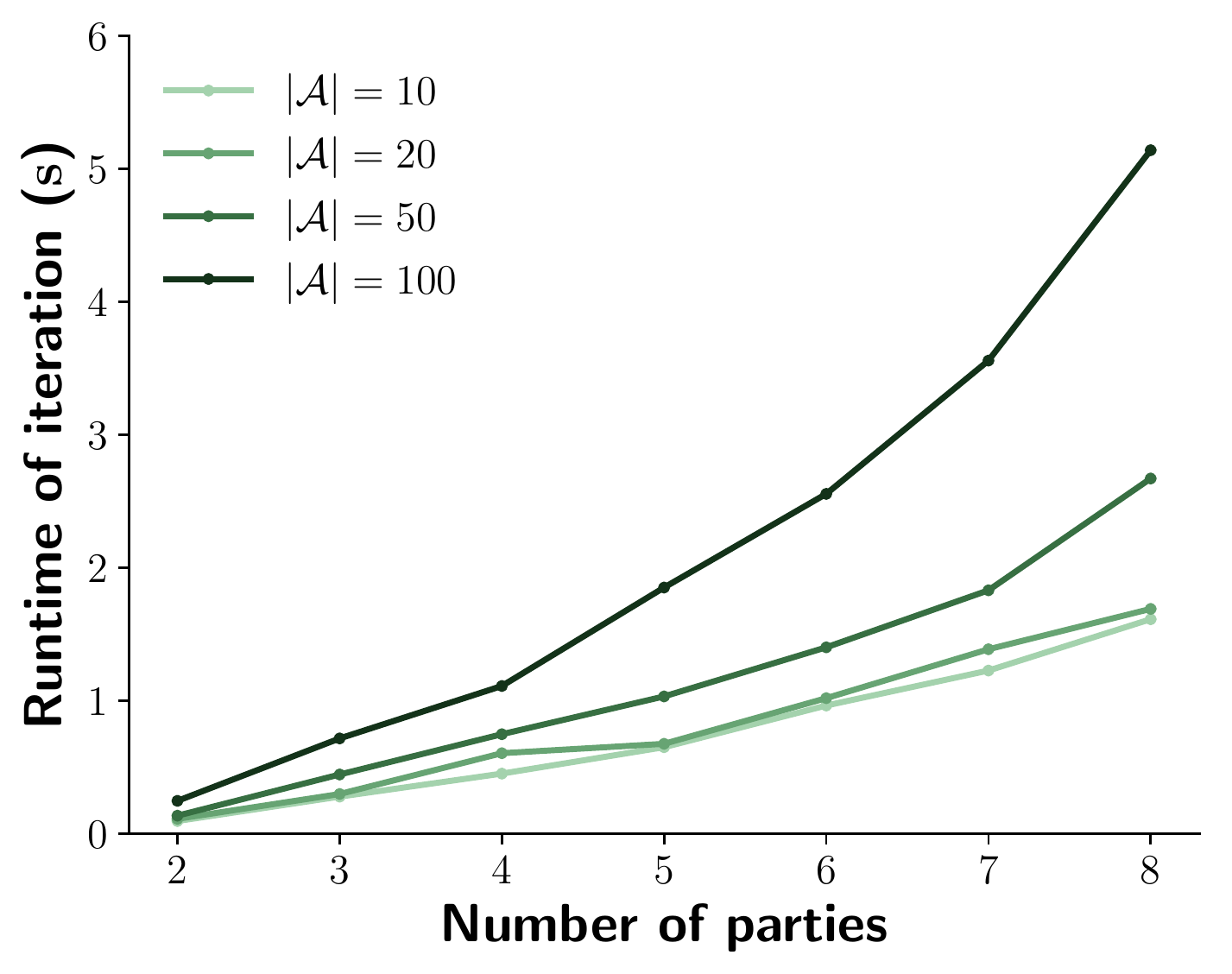}
\caption{Run-time of Algorithm~\ref{alg:bandits} as a function of the number of
parties, $|\mathcal{P}|$. Each line if for a different number of arms, $|\mathcal{A}|$.}
\label{fig:scaling_parties}
\end{figure}

\section{Experiments}
\label{sec:experiments}
We perform experiments on the MNIST dataset to evaluate the efficiency and reward-privacy trade-off of our algorithm. We reduce the data dimensionality by projecting each digit image to the first 20 principal components of the dataset, and normalize each resulting vector to have unit length. We perform a single sweep through the $60,000$ images in the MNIST training set. At each iteration, the parties receive a new (secret-shared) image and need to select one of $10$ arms (digit classes). The reward is $1$ if the selected arm
corresponds to the correct digit class, and $0$ otherwise. We implement Algorithm~\ref{alg:bandits} on a ring $\thefield$ with $Q \!=\! 2^{64}$. We rely on the property of 64-bit integer operations, where a result that is too large to fit in 64 bits is automatically mapped to $\thefield$ with $Q \!=\! 2^{64}$. We use $L \!=\! 20$ bits of precision to encode floating-point values into $\thefield$ and $7$ Newton-Rhapson iterations for
computing the reciprocal function. Most of our experiments are performed using $|\mathcal{P}| \!=\! 2$ parties. Code reproducing the results of our experiments is publicly available at \url{http://www.anonymized.com}.

\noindent\textbf{Reward and privacy.} Figure~\ref{fig:bandits_epsilon} shows
the average reward that our private contextual bandits obtain for four
different values of $\epsilon$. We compared the results obtained by our
algorithm to that of a non-private implementation of the epsilon-greedy
contextual bandit learner, and confirmed that the observed rewards are the same for a
range of $\epsilon$ values.

Figure~\ref{fig:reward_privacy} shows the average reward (averaged over $5$ runs) observed as a function of the differential privacy parameter, $\eta$ (higher values represent less privacy). The results were obtained by varying $\epsilon$ and are shown for experiments with three different dataset sizes, $T$. The results show that at certain levels of differential privacy, the reward obtained by the private algorithm is higher than that of its non-private counterpart ($\eta \!=\! \infty$). Indeed, some amount of exploration benefits the learner whilst also providing differential privacy. For higher levels of privacy, however, the reward obtained starts to decrease because too much exploration is needed to obtain the required level of privacy.

\begin{table}[b]
\vspace{-2.75mm}
\captionof{table}{Performance characteristics of key operations in Algorithm~\ref{alg:bandits} (for $|\mathcal{P}|\!=\! 2$ and $|\mathcal{A}| \!=\! 100$): number of communication rounds between parties (left) and slowdown compared to a non-private implementation of the learner (right).}
\label{table:ops}
\centering
\begin{tabular}{ l r r r }
\toprule
\bf Operation& \bf Rounds &  \bf Slowdown  \\
\midrule
\rowcolor{Gray} Addition        &  0  & $11\times$  \\
Multiplication                   & 2   & $380\times$  \\
\rowcolor{Gray} Reciprocal & 30   &  $6,000\times$  \\
    Argmax                     & $\mathcal{O}\left( |\mathcal{P}| + \log_{2} (|\mathcal{A}|)\right)$ &  $34,000\times$  \\
\bottomrule
\end{tabular}
\end{table}

\begin{figure}[t]
\centering
\includegraphics[width=\linewidth]{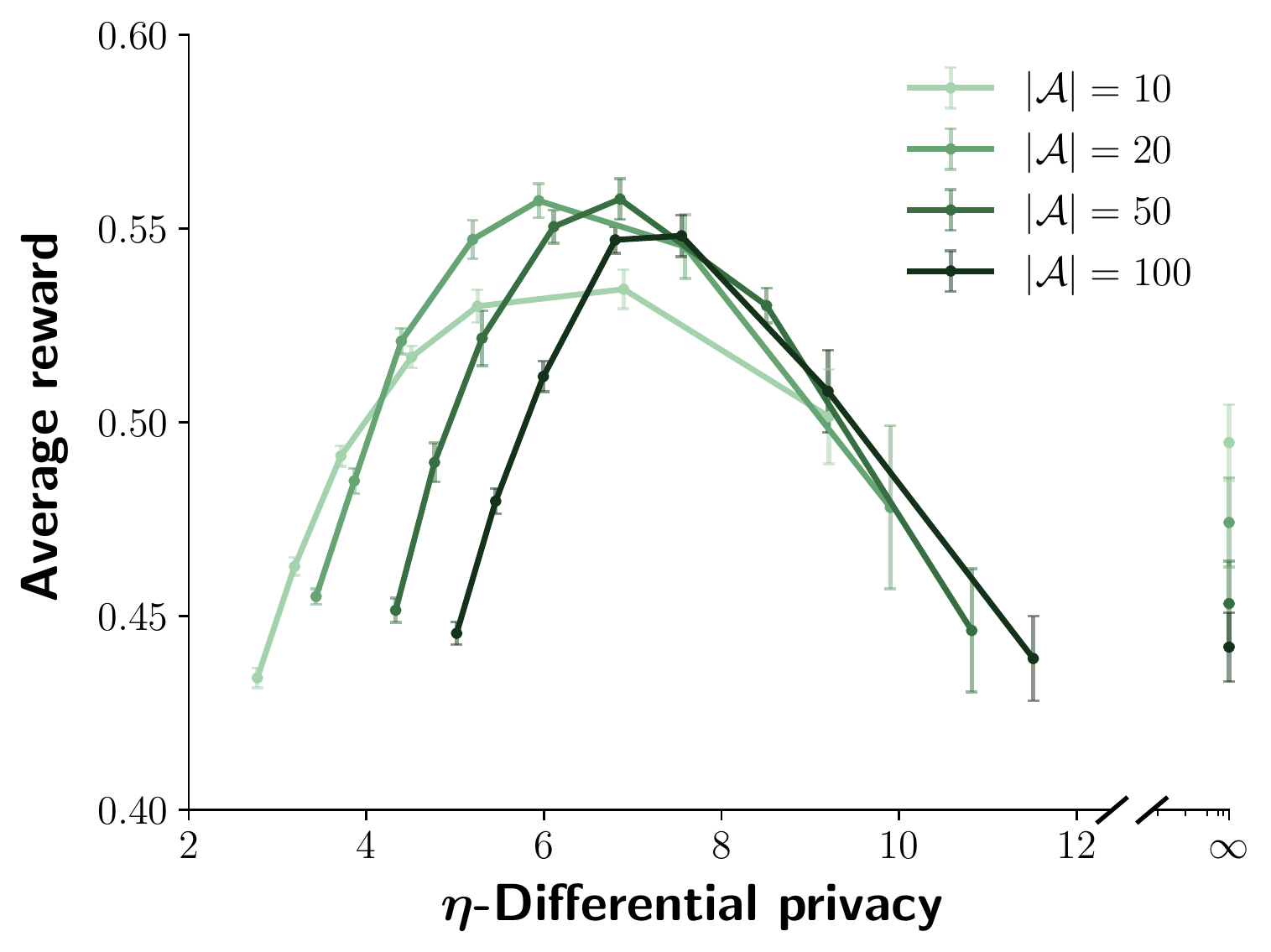}
\caption{Average reward as a function of differential privacy, $\eta =
\log(|\mathcal{A}| / \epsilon)$ for a variable number of arms, $|\mathcal{A}|$. Privacy decreases as $\eta$ increases; the value $\eta =
\infty$ corresponds to the non-private version of the algorithm.}
\label{fig:privacy_arms}
\end{figure}

\noindent\textbf{Efficiency and scale.}
Table~\ref{table:ops} reports the run-time of the key operations in Algorithm~\ref{alg:bandits}, compared to an implementation of the same operations in PyTorch. In our experiments, the private contextual-bandit implementation with $|\mathcal{P}| \!=\! 2$ parties and $|\mathcal{A}| \!=\! 10$ actions is nearly
500$\times$ slower than a regular implementation. In real-world settings, the slow-down would likely be even higher because of network latency: our experiments were performed on a single machine where each party is implemented as a separate process. There are two key sources of inefficiency in
Algorithm~\ref{alg:bandits}:
\vspace{-2mm}
\begin{enumerate}[leftmargin=*]
\item The weight update is
$\mathcal{O}(D)$ in a regular contextual-bandit implementation (only the
weights for the selected arm are updated) but $\mathcal{O}(|\mathcal{A}| D)$ in
Algorithm~\ref{alg:bandits}: the private implementation cannot reveal the
selected arm and, therefore, has to update all the weights. We note that
these weight updates are parallelizable over arms.
\item Some operations
(\emph{e.g.}, reciprocal and argmax) require additional computation and
communication.
\end{enumerate}

We also perform experiments in which we vary the number of arms, $|\mathcal{A}|$. To
increase the number of arms, we construct a $K$-means clustering of the dataset and set $|\mathcal{A}| \!=\!
K$.  We define the rewards to be Bernoulli-distributed with $P(r_a \!=\! 1)
\!=\! \nu_a$. The probabilities $\nu_a$ are Gaussian-kernel values based on the
distances from a data point to the inferred cluster centers: $\nu_a \propto e^{-\frac{1}{2\sigma} \|c_a - x\|_2}$,
where $c_a$ is the $a$-th cluster and $\sigma$ is used to control the
difficulty of the problem. We set $\sigma\!=\! 1/2$ in our experiments. Figure~\ref{fig:scaling_arms} demonstrates how the contextual-bandit algorithm scales with the number of arms.  For small numbers of arms ($|\mathcal{A}|
\le 20$), the implementation overhead dominates the computation time. For larger
numbers of arms ($|\mathcal{A}| \ge 40$), we observe quadratic scaling. Figure~\ref{fig:scaling_parties} shows how the
algorithm scales as a function of the number of parties, $|\mathcal{P}|$. The results illustrate that the run-time of our algorithm is $\mathcal{O}(|\mathcal{P}|^2)$: all parties communicate with each other in every communication round, which leads to the quadratic scaling observed.

Figure~\ref{fig:privacy_arms} demonstrates how the reward changes as a function
of the privacy loss $\eta$ when the number of arms, $|\mathcal{A}|$, is varied. The privacy loss increases logarithmically in the number of
arms, but the amount of
exploration needed also increases. As a result, the optimal privacy loss in
terms of reward only tends to increase slightly as the number of arms in the bandit increases. Indeed, this increase may be prohibitively large for web-scale recommendation applications in which the bandit has to select one arm out of millions of arms.

\begin{figure}
\centering
\includegraphics[width=1.04\linewidth]{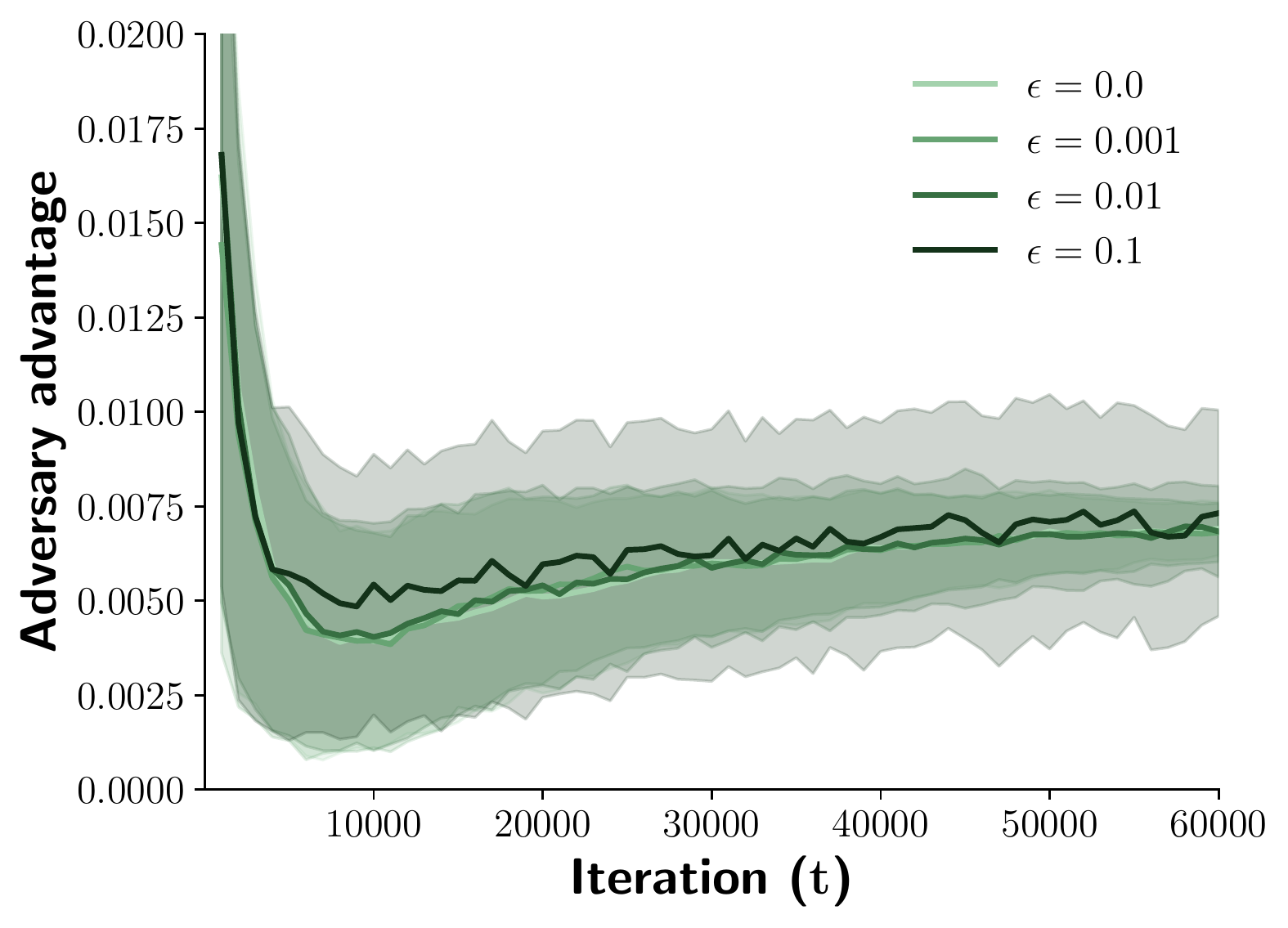}
\caption{Average accuracy (over 200 runs) of
membership-inference attacks~\citep{yeom2018privacy} on
our bandit algorithm at different stages of
training, for four different values of the exploration
parameter, $\epsilon$.}
\label{fig:membership}
\end{figure}

\noindent\textbf{Membership inference attacks.} To empirically measure the
privacy of our contextual-bandit algorithm, we also performed experiments in which
an adversary tries to infer whether or not a sample was part of the training dataset by applying the membership inference
attack of~\citet{yeom2018privacy} on model checkpoints saved at various points during training. The
membership inference attack computes an empirical estimate of the joint
action-reward distribution on the data that was used to train the model and on
a held-out test set, respectively (we use the MNIST test set as held-out set).
We use the resulting empirical distributions, $p_{train}(r | a)$ and $p_{test}(r | a)$, to infer training-data
membership for an example $\bx_{T+1}$. Specifically, we: (1) evaluate the model
on $\bx_{T+1}$, (2) observe the selected arm $a_{T+1}$, (3) receive corresponding reward $r_{T+1}$, and (4) predict
training data membership if $p_{train}(r_{T+1} | a_{T+1}) > p_{test}(r_{T+1} | a_{T+1})$.

Following~\citet{yeom2018privacy}, we measure the \emph{advantage} of the resulting
adversary: the difference in the true positive rate and the false positive rate
in predicting training set membership. The adversary advantage during training
is shown in Figure~\ref{fig:membership} for models trained with different
values of $\epsilon$. The results show that in the early stages of learning,
the adversary has a slight advantage of $1\mbox{--}2\%$, this advantage rapidly
decreases below $0.75\%$ after the learner has observed more training
examples\footnote{For higher values of $\epsilon$, there is more variance in
the model parameters during training, which is reflected in higher variance in
the advantage values.}. The advantage slightly increases in the later stages of training: interestingly, this happens because the model slightly underfits on the MNIST dataset. Overall, the results of our experiments suggest that our contextual
bandit learner is, indeed, maintaining privacy of the context features, $\bx$, well in practice.

\section{Related Work}
\label{sec:related_work}
This study fits into a larger body of work on privacy-preserving machine learning. Prior work has used similar techniques from secure multi-party computation (and homomorphic encryption) for secure evaluation and/or training of deep networks \citep{dowlin2016cryptonets,hynes2018efficient,juvekar2018gazelle,mohassel2017secureml,sadegh2017chameleon,shokri15privacy,wagh2018securenn} and other classification models \citep{decock2019efficient,reich2019privacy}. Other related work has developed secure data-aggregation techniques \citep{bonawitz2017aggregation} for use in federated-learning scenarios \citep{bonawitz2019federated}. To the best of our knowledge, our study is the first to use this family of techniques in the online-learning setting of contextual bandits, using the randomness introduced by exploration mechanisms to obtain a differential-privacy guarantee on the output produced by the learner.

Most closely related to our work are studies on differential private online learning \citep{dwork2019continual,jain2012diff,thakurta2013nearly}. In particular, \citet{mishra2015private} develops UCB and Thompson samplers for (non-contextual) bandits with differential-privacy guarantees based on tree-based aggregation~\citep{chan2019private,dwork2009complexity}. Follow-up work improved differentially private UCB to have better regret bounds~\citep{tossou2016algorithms}. Recent work~\citep{shariff2018diff} also developed a \emph{joint} differentially private version of LinUCB~\citep{li2010contextual}. In contrast to those prior studies, we study a more challenging setting in which the parties that implement the learner may not leak information about their observed contexts, actions, and rewards to each other. Having said that, our algorithm may be improved using the differentially private mechanisms of~\citet{mishra2015private,tossou2016algorithms}. In this study, we opted for the simpler epsilon-greedy mechanism because it can be implemented efficiently on arithmetically shared data. We leave the implementation of differentially private UCB and Thompson samplers in our secure multi-party computation framework to future work.

\section{Summary and Future Work}
\label{sec:discussion}
We presented a privacy-preserving, multi-party contextual bandit algorithm that works correctly in practice and that comes with theoretical guarantees on (differential) privacy of the context features. Although our experimental evaluation of this algorithm demonstrates its effectiveness, several avenues for improvement remain:
\begin{itemize}[leftmargin=*]
\setlength\itemsep{0em}
\item \textbf{Increase numerical stability.} Repeated use of the Sherman-Morrison formula is known to produce cancellation errors that may lead to numerical errors. A numerically stable algorithm would regularly compute the actual matrix inverse to eliminate such errors, and add a diagonal-regularizer to prevent ill-conditioning issues.

\item \textbf{Robustness to disappearing parties.} In many practical settings, parties may temporarily disappear because of system failures~\citep{bonawitz2019federated}. To allow the algorithm to operate in such scenarios, different types of secret sharing (\emph{e.g.}, Shamir sharing~\citep{shamir1979}) may be needed. The contextual bandit itself could learn to be robust to failing parties by employing a kind of ``party dropout'' at training time \citep{srivastava2014dropout}.

\item \textbf{Security under stricter security models.} The current algorithm assumes parties are honest-but-curious, which means that parties do not deviate from the protocol in Algorithm~\ref{alg:bandits}. It is important to note that out privacy guarantees do not hold in stricter security models in which one or more parties operate adversarially or in settings in which the parties collude. Our current algorithm can be extended to provide guarantees under stricter security models: for instance, extending the algorithm to use \emph{message authentication codes} \citep{goldreich2009} would allow the parties to detect attacks in which a minority of the parties behaves adversarially. Unfortunately, such extensions generally increase the computational and communication requirements of the learner.

\item \textbf{Robustness to side-channel attacks.} In practical scenarios, it may be possible to break the privacy of our learner via \emph{side-channels attacks}. For example, there is a delay between taking the action and receiving the reward that may make the learner susceptible to timing attacks \citep{kocher1996timing}: if the distribution of reward delays depends on the action being selected, parties $p \in \mathcal{P}$ may be able to infer the selected action from the observed time delay, counter to our guarantees. A real-world implementation of our algorithm should, therefore, introduce random time delays in the operations performed by parties $p'$ and $p''$ to prevent information leakage.

\item \textbf{Stronger membership-inference attacks.} The membership-inference attacks we considered in this study \citep{yeom2018privacy} are not designed to use the full action-reward sequence as side information in the attack. It may thus be possible to strengthen these membership-inference attacks by using the full action-reward sequence, which may be observed by an external observer of the algorithm. The development of such stronger attacks may help to obtain better empirical insights into the level of privacy provided by our privacy-preserving contextual bandits.

\item \textbf{Scaling to larger problems.} Our algorithm was tested on the task of digit recognition where the action set is small and the feature space was reduced. We leave as follow-on work scaling the algorithm to more realistic problems with larger actions sets, for example, by developing efficient approximations to the $\argmax$ function.
\end{itemize}

 \section*{Acknowledgements}

The authors thank Mark Tygert, Ilya Mironov and Xing Zhou for helpful discussions and
comments on early drafts of this paper.

\bibliographystyle{icml2020}
\bibliography{references.bib}

\pagebreak
\appendix

\section{Secret Sharing}

Our privacy-preserving contextual bandits use two different types of secret sharing: (1) arithmetic secret
sharing~\cite{spdz}; and (2) binary secret sharing~\cite{gmw}. Below, we
describe the secret sharing methods for single values $x$ but they can
trivially be extended to real-valued vectors $\bx$.

\paragraph{Arithmetic secret sharing.} Arithmetic secret sharing is a type of
secret sharing in which the sum of the shares reconstruct the original data
$x$. We refer to the shared representation of $x$ as $[x]$. The shared
representation across parties, $\mathcal{P}$, is given by $[x] = \{\forall p \in \mathcal{P}
: [x]_p\}$, where $[x]_p$ indicates the share of $x$ that party $p \in \mathcal{P}$ has. The
representation has the property that $\sum_{p\in \mathcal P} [x]_p \mod Q \!=\! x$. To
make sure that none of the parties can learn any information about $x$ from
their share $[x]_p$, shares need to be sampled uniformly from a ring of
size $Q$, $\thefield$, and all computations on shares must be performed
modulus $Q$. If $x$ is real-valued, it is encoded to lie in $\thefield$
using the mechanism described in Appendix~\ref{sec:encoding} before it is
encrypted.

To encrypt the unencrypted data $x$, party $p$ that possesses $x$ draws $|\mathcal{P}|-1$
numbers uniformly at random from $\thefield$ and distributes them among the
other parties. Subsequently, party $p$ computes its own share as $[x]_p = x -
\sum_{q \neq p} [x]_q$. Thus all the parties (including party $p$) obtain a
random number that is uniformly distributed over the ring, from which they cannot
infer any information about $x$. To decrypt $[x]$, the parties communicate
their shares and compute $x = \sum_{p \in \mathcal P} [x]_p \mod Q$.

\paragraph{Binary secret sharing.} Binary secret sharing is a special type of
arithmetic secret sharing for binary data in which the ring size $Q = 2$~\citep{gmw}. Because addition modulo two is equivalent to taking an exclusive
OR (XOR) of the bits, this type of sharing is often referred to as XOR secret
sharing. To distinguish binary shares from arithmetic shares, we denote a
binary share of variable $x$ across $P$ parties by $\langle x\rangle = \{
 \forall p \in \mathcal{P}: \langle x\rangle_p \}$. Just as with arithmetic sharing,
 binary secret shares allow for ``linear'' operations on bits without decryption.
 For example, binary sharing allows for the evaluation of any circuit expressed
 as XOR and AND gates. While it is much more efficient to do addition and
 multiplication of integers with arithmetic shares, logical expressions such as
 $\textrm{max}(0, x)$ are more efficient to compute with binary shares. In
 equations, we denote AND by $\otimes$ and XOR by $\oplus$.

To encrypt the unencrypted bit $x$, party $p \in \mathcal{P}$ that possesses $x$ draws
$|\mathcal P|-1$ random bits and distributes those among the other parties. These
form the shares $\forall q \neq p: \langle x \rangle_q$ for $|\mathcal P|-1$ parties.
Subsequently, party $p \in \mathcal{P}$ computes its own share as $\langle x \rangle_p = x
\oplus \left( \bigoplus_{q \neq p} \langle x \rangle_q \right)$. Thus all the
parties (including party $p$) obtain a random bit from which they cannot
infer any information about $x$.

\subsection{Converting Between Secret-Sharing Types}

Contextual bandit algorithms involve both functions that are easier to compute on arithmetic secret shares (\emph{e.g.}, matrix multiplication) and functions that are easier to implement via on binary secret shares (\emph{e.g.}, argmax) using binary circuits. Therefore, we use both types of secret sharing and convert between the two types using the techniques proposed in~\cite{demmler2015aby}.

\noindent\textbf{From $[x]$ to $\langle x\rangle$:} To convert from an arithmetic
share $[x]$ to a binary share $\langle x \rangle$, each party first secretly
shares its arithmetic share with the other parties and then performs addition
of the resulting shares. To construct the binary share $\langle [x]_p \rangle$
of its arithmetic share $[x]_p$, party $p \in \mathcal{P}$: (1) draws $|\mathcal P|-1$ random bit strings
$\langle [x]_p \rangle_q$ and shares those with the other parties and (2)
computes its own binary share $\langle [x]_p \rangle_p = \bigoplus_{q \neq p}
\langle [x]_p \rangle_q$. The parties now each obtained a binary share of $[x]_p$
without having to decrypt $x$. This process is repeated for each party $p \in \mathcal{P}$ to create
binary secret shares of all $|\mathcal{P}|$ arithmetic shares $[x]_p$. Subsequently, the
parties compute $\langle x \rangle = \sum_{p\in \mathcal P} \langle[x]_p\rangle$. The summation is implemented by Ripple-carry adder that can be evaluated in $\log_2 (|\mathcal P| \log_2 Q)$
rounds~\cite{catrina2010improved, damgrd2005unconditionally}.

\noindent\textbf{From $\langle x\rangle$ to $[x]$:} To convert from a binary share
$\langle x \rangle$ to an arithmetic share $[x]$, the parties compute $[x] =
\sum_{b=1}^B 2^b \left[\langle x\rangle^{(b)}\right]$, where $\langle x\rangle^{(b)}$
denotes the $b$-th bit of the binary share $\langle x\rangle$ and $B$ is the total number of bits in the shared secret. To create the arithmetic
share of a bit, $\left[\langle x\rangle^{(b)}\right]$, each party $p \in \mathcal{P}$
draws a number uniformly at random from $\{0, 1\}$ and shares the difference between
their bit and the random number with the other parties. The parties sum all resulting shares to
obtain $\left[\langle x\rangle^{(b)}\right]$.

\subsection{Logical Operations and the Sign~Function}
We rely on binary secret sharing to implement logical operations and the sign function.

\noindent\textbf{XOR and AND.} XOR and AND are addition and multiplication modulo 2
where the numbers belong to the set $\{0, 1\}$ --- they are the binary
operations in $\mathbb{Z}_2$. As a result, the techniques we use for addition
and multiplication of arithmetically shared values (see paper) can be used to implement XOR and AND as well.
Evaluating $\langle x \rangle \oplus  a$ function amounts to one party
$p \in \mathcal{P}$ computing $\langle x \rangle_p \oplus a$, and evaluating $\langle x \rangle
\oplus \langle y \rangle$ amounts to each party $p \in \mathcal{P}$ computing $\langle x
\rangle_p \oplus \langle y \rangle_p$. Similarly, $\langle x \rangle \otimes a$
is evaluated by having each party compute $\langle x \rangle_p \otimes a$. The
AND operation between two private values, $\langle x \rangle \langle y
\rangle$, is implemented akin to the private multiplication protocol using Beaver
triples.

\noindent\textbf{Sign function.} We express the $\textrm{sign}$ function on an arithmetically shared value as $[x \ge 0]$. Using
this expression, the sign function can be implemented by first converting the
arithmetic share, $[x]$, to a binary share, $\langle x \rangle$, using the
conversion procedure described above. Subsequently, we obtain the most
significant bit, $\langle x\rangle^{(MSB)}$, and convert it back to an
arithmetic share to obtain $[x \ge 0]$.

\section{Fixed-Precision Encoding}
\label{sec:encoding}

Contextual bandit algorithms generally use real-valued parameters and data. Therefore, we need to encode the real-valued numbers as integers before we can arithmetically share them.
We do so by multiplying $x \in \mathbb{R}$ with a large scaling factor $B$ and rounding to
the nearest integer: $\hat{x} = \lfloor B x \rceil$, where $B = 2^L$ for some
precision parameter, $L$. We decode an encoded value, $\hat{x}$, by
computing $x = \hat{x} / B$. Encoding real-valued numbers this way incurs a precision loss
that is inversely proportional to $L$.

Since we scale by a factor $B \!=\! 2^L$ to encode floating-point numbers, we
must scale down by a factor $2^L$ after every multiplication. We do this using
the public division protocol described in Appendix~\ref{sec:pub_division}.

\section{Public Division}
\label{sec:pub_division}

\begin{algorithm}[t]
\caption{Private computation of the number of wraps in an arithmetically shared value.}
\label{alg:wraps}
\begin{algorithmic}
\STATE {\bfseries Input:}\\
- Arithmetically secret-shared value, $[x]$.\\
- A store of random numbers and the number of wraps in those numbers generated off-line.\\
\STATE {\bfseries Output:} Arithmetic sharing of the number of wraps in $[x]$, denoted by $[\theta_x]$.\\
Select next random number, $[r]$, and the number of wraps $[\theta_r]$ in $[r]$. \\
$[z] \leftarrow [x] + [r]$ \\
\FOR{$p \in \mathcal P$}
    \STATE
	Party $p$ computes $\beta_{x_pr_p}$ from $[x]_p$ and $[r]_p$ such that $[z]_p = [x]_p + [r]_p - \beta_{x_pr_p} Q$.
\ENDFOR \\
Parties construct arithmetic sharing, $[\beta_{xr}] = \{\forall p \in \mathcal P : [\beta]_p\}$. \\
Parties decrypt $[z]$ to obtain $z$ (note that $z$ contains no information about $x$). \\
Parties compute number of wraps in $z$, $\theta_z$. \\
Parties compute $[\eta_{xr}] \leftarrow z < [r]$. \\
Parties compute $[\theta_x] \leftarrow \theta_z + [\beta_{xr}] - [\theta_r] - [\eta_{xr}]$ (number of wraps in $[x]$). \\
\end{algorithmic}
\end{algorithm}

A simple method to divide an arithmetically shared value, $[x]$, by a public value, $\ell$, would simply divide the share of each party by $\ell$). However, such a method can produce incorrect results when the sum of shares ``wraps around'' the ring size, $Q$. Defining $\theta_x$ to be the number of wraps such that $x = \sum_{p \in \mathcal P} [x]_p - \theta_x Q$, indeed, we observe that:
\[
    \frac{x}{\ell} = \sum_{p \in \mathcal P} \frac{[x]_p}{\ell} - \frac{\theta_x}{\ell} Q \ne \sum_{p \in \mathcal P} \frac{[x]_p}{\ell} - \theta_x Q.
\]
Therefore, the simple division method fails when $\theta_x \ne 0$, which happens with probability $P(\theta_x \neq 0) = \frac{x}{Q}$ in the two-party case. Many prior MPC implementations
specialize to the $|\mathcal P| =2$ case and rely on this probability being
negligible~\cite{mohassel2017secureml, sadegh2017chameleon, wagh2018securenn}.
However, when $|\mathcal P|>2$ the probability of failure grows rapidly and we must account
for the number of wraps, $\theta_x$.

We do so by privately computing a secret share of the number of wraps in $x$, $[\theta_x]$, using Algorithm~\ref{alg:wraps}. We use $[\theta_x]$ to compute the correct value of the division by $\ell$:
\[
    \frac{x}{\ell}  = [z] - [\theta_x] \frac{Q}{\ell} \quad \text{where} \quad
    [z] = \left\{\forall p \in \mathcal P : \frac{[x]_p}{\ell}\right\}.
\]

In practice, it can be difficult to compute $[\eta_{xr}]$ in Algorithm~\ref{alg:wraps} (line 8).
We note that $\eta_{xr}$ has a fixed probability of being non-zero, irrespective of whether the number of parties is two or larger, \emph{i.e.}, regardless of the number
of parties $P(\eta_{xr} \neq 0) = \frac{x}{Q}$. In practice, we therefore skip the
computation of $[\eta_{xr}]$ and simply set $\eta_{xr}=0$. This implies that incorrect results can be produced
by our algorithm with small probability. For example, when we multiply two real-values, $\hat
x$ and $\hat y$, the result will be encoded as $B^2 \hat x \hat y$ which has
probability $\frac{B^2\hat x \hat y}{Q}$ of producing an error. This
probability can be reduced by increasing $Q$ or reducing the precision
parameter, $B$.

\begin{figure}[h!]
\centering
\includegraphics[width=\linewidth]{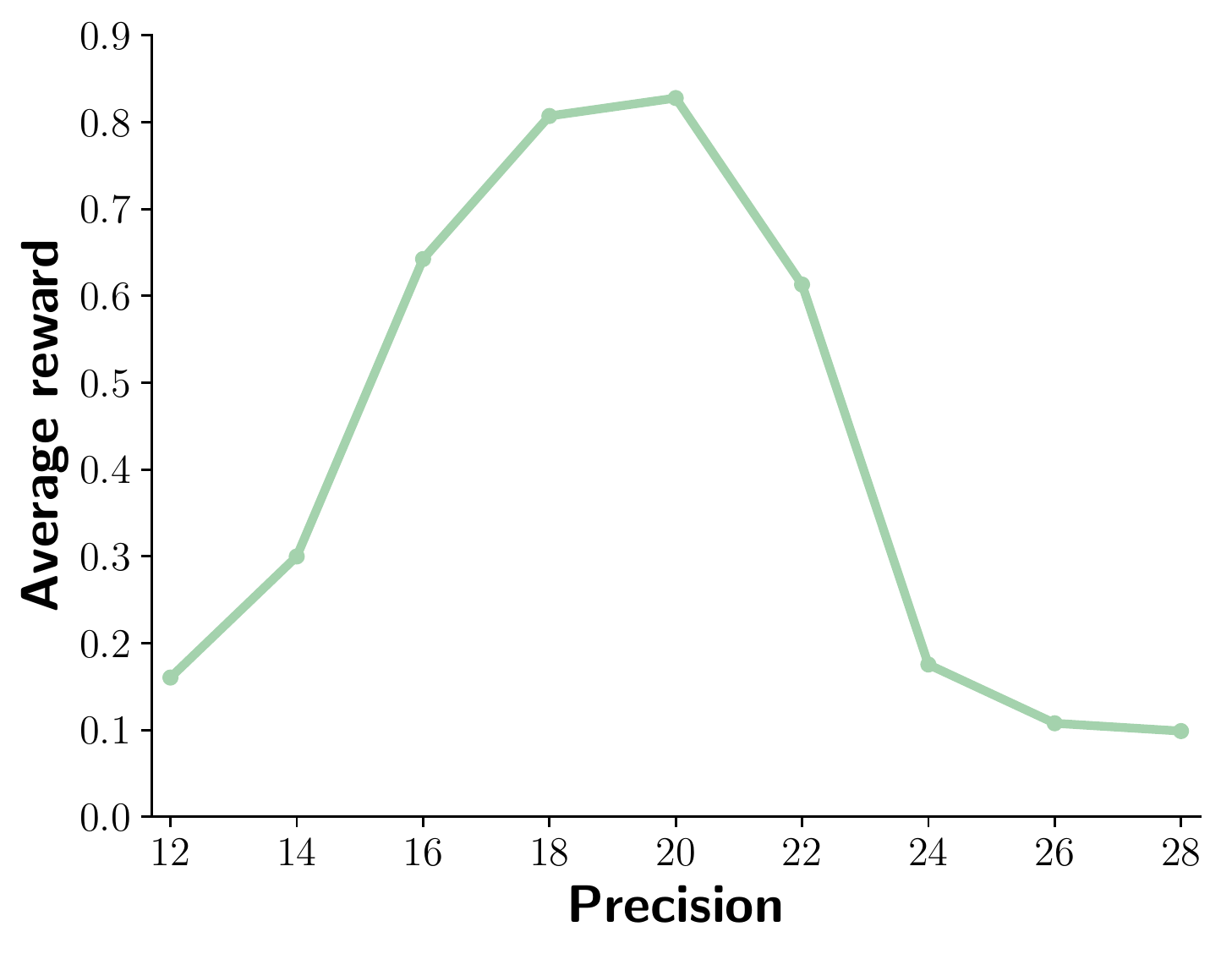}
\caption{Average reward as a function of the bits of precision, $L$, used to
encode floating-point values into $\thefield$. We multiply values by $2^L$ to encode them as fixed-point
integers in $\thefield$.}
\label{fig:reward_precision}
\end{figure}

\begin{figure}[h!]
\centering
\includegraphics[width=\linewidth]{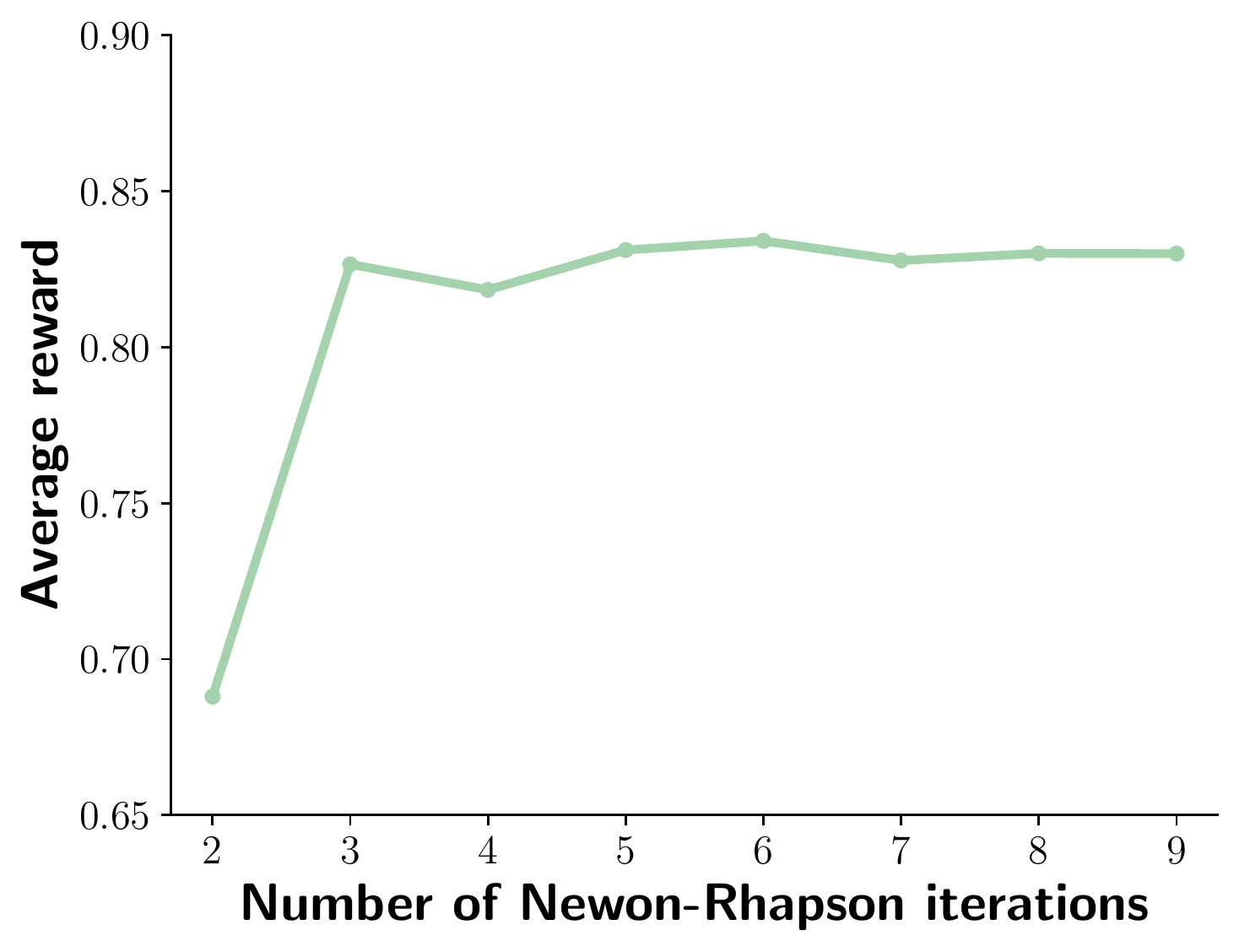}
\caption{Average reward as a function of the number of Newton-Rhapson
iterations used for privately computing the reciprocal function (see Section~3 in the main paper for details).}
\label{fig:reward_nrits}
\end{figure}

\section{Numerical Precision}
\label{sec:numerical_precision}
In addition to the experiments presented in the paper, we also performed experiments to measure the impact of varying the precision in the fixed-point encoding and the numerical approximations.
Figure~\ref{fig:reward_precision} shows the average reward as a function of the
bits of precision used in encoding of floating-point values. The optimal
precision is $20$ bits with a sharp drop in reward obtained below $18$ and above $22$ bits. The drop below $18$ bits is due precision loss causing numerical instability. Algorithm~1 in the main paper is susceptible to three forms of
numerical instability: (1) ill-conditioning due to relying on the normal
equations to solve the least-squares problem, (2) degeneracies in the matrix
$\bW_a^{-1}$ which can become singular or non-positive-definite, and (3)
cancellation errors due to use of the Sherman-Morrison update. The drop in observed reward when using more than $22$ bits of precision is due to wrap-around
errors that arise because we do not correctly compute $\eta_{xr}$ (see Section~\ref{sec:encoding}). This causes public divisions to fail catastrophically with higher probability, impeding the accuracy of the learning algorithm.

Figure~\ref{fig:reward_nrits} shows how the average reward changes as a
function of the number of Newton-Rhapson iterations used to approximate the
private reciprocal. The results reveal that three iterations suffice in our
experiments. We note that the domain of the private reciprocal in
Algorithm~1 in the main paper is $[1, c]$. In all our experiments, we observed empirically that $c < 10$.
For use cases that require a larger range of values, $[1, c]$, more iterations and a different initial
value may be needed to ensure rapid convergence.

\end{document}